\def\quotient#1#2{%
    \raise1ex\hbox{$#1$}\Big/\lower1ex\hbox{$#2$}%
}
\newtheorem*{defin}{Definition}
\newtheorem{lemma}{Lemma}
\newcommand{\stirling}[2]{\genfrac{\{}{\}}{0pt}{}{#1}{#2}}
\newcommand{\probP}[0]{\text{P}} 
\newcommand{\falling}[2]{({#1})_{{#2}}}
\newcommand{\nchoosek}[2]{\genfrac{(}{)}{0pt}{}{#1}{#2}}
\newenvironment{fproof}[1][Derivation of raw integer moments]{\begin{trivlist}
\item[\hskip \labelsep {\bfseries #1}]}{\end{trivlist}}
\newenvironment{proof}[1][Proof]{\begin{trivlist}
\item[\hskip \labelsep {\bfseries #1}]}{\end{trivlist}}
\newenvironment{derivation}[1][Derivation]{\begin{trivlist}
\item[\hskip \labelsep {\bfseries #1}]}{\end{trivlist}}
\title{What is the distribution of the number of unique original items in a bootstrap sample?}
\author[1]{Alex F. Mendelson\thanks{alexander.mendelson.11@ucl.ac.uk}}
\author[1]{Maria~A.~Zuluaga}
\author[2,3]{Brian~F.~Hutton}
\author[1,4]{S\'{e}bastien~Ourselin}
\affil[1]{Translational Imaging Group, Centre for Medical Image Computing, University College London, NW1 2HE London, UK}
\affil[2]{Institute of Nuclear Medicine, University College London, NW1 2BU London, UK}
\affil[3]{Centre for Medical Radiation Physics, University of Wollongong, 2522 NSW, Australia}
\affil[4]{Dementia Research Centre, University College London, WC1N 3BG London, UK }
\begin{document}

\maketitle

\begin{abstract}

Sampling with replacement occurs in many settings in machine learning, notably in the bagging ensemble technique and the .632+ validation scheme. The number of unique original items in a bootstrap sample can have an important role in the behaviour of prediction models learned on it. Indeed, there are uncontrived examples where duplicate items have no effect. The purpose of this report is to present the distribution of the number of unique original items in a bootstrap sample clearly and concisely, with a view to enabling other machine learning researchers to understand and control this quantity in existing and future resampling techniques. We describe the key characteristics of this distribution along with the generalisation for the case where items come from distinct categories, as in classification. In both cases we discuss the normal limit, and conduct an empirical investigation to derive a heuristic for when a normal approximation is permissible.

\bigskip
\noindent \textbf{Keywords:} bootstrap, resampling, sampling with replacement, bagging

\end{abstract}



\section{Introduction}

Bootstrap resampling, or sampling with replacement from the given data, is used to mimic the sampling process that produced the original sample in the first place. By randomly drawing items from the original dataset with equal probability, one is effectively drawing a fresh sample from the ``empirical distribution'', a multinomial with equal probability assigned to each item in the original sample. This can be seen as a best guess at the true population distribution. It has been over 30 years since \cite{efron1979bootstrap} introduced the bootstrap to statistics, and now there exist many different applications and related resampling techniques are found in statistics and computer science. Of particular interest to researchers in pattern recognition and machine learning are the bagging ensemble method of \cite{Bagging1996} and the .632+ validation scheme of \cite{632p} for supervised algorithms.

In these settings, and others involving the construction of prediction rules on bootstrap samples, an important quantity of interest is the number of unique items from the original sample \cite{Rao1997seqres}. In many cases, this can be seen as a limit on the amount of information carried down from the original sample. For learning algorithms trained on bootstrap samples, the reduction in the number of unique items expected can be viewed as an effective reduction in training sample size, particularly in high dimensional problems where the prediction model is unstable with respect to changes in the training data. 

While generally it is a desirable property of learning algorithms that they are flexible and responsive to new information, there also exist specific cases where the unique items in a training sample are the sole determinant of the prediction rule. In (hard-margin) support vector machine classification where the number of dimensions exceeds the number of items observed, it is always possible to divide the two classes in the training data. The prediction rule is determined only by the points that lie on the maximum margin separating hyperplane, whose duplication has no effect on its location. Another simple example is found in nearest-neighbour classification and regression in continuous feature spaces where the probability of obtaining any truly identical training items is zero. Without competitions between multiple prediction values due to items at exactly the same location, the resulting prediction rule is unaffected by duplications.

As the number of unique items in a bootstrap sample is an important determinant of the behaviours of prediction rules learned on it, the distribution of this quantity should be of interest to researchers working on their development and validation. While related distributions have long been studied in a purer mathematical context \cite{Urns}, and this distribution has been identified before in this setting \cite{Outlier,Abadie}, nowhere were we able to find a concise and accessible summary of the relevant information for the benefit of researchers in machine learning. Our aim here is to fill this gap by presenting this distribution along with its key properties, and to make it easier for others who to understand or modify resampling techniques in a machine learning context.

In section \ref{background}, we discuss the relevance of the number of unique items to the bagging ensemble method and the .632+ validation scheme. In section \ref{present_dist} we give a closed form of this distribution together with its notable properties. We give an empirically derived rule to decide whether normal approximation is permissible, and describe how we produced it. In section \ref{multi}, we consider the case where the items in the original sample belong to one of several categories, as is the case in classification problems. Here, where the outcome is the vector of the number of unique original items from each category, we show that the limit distribution is multivariate normal, and consider limits to produce a heuristic for the normal approximation of the number of unique items from a single category.

\section{Background}

\label{background}

\subsection{Bagging}

Bootstrap aggregation, or bagging, uses the perturbed samples generated by sampling with replacement to produce a diverse set of models which can then be combined to promote stability \cite{Bagging1996}. Though a given model may be over-fit to its sample, the combination should less reflect those unstable aspects particular to an individual sample \cite{buchlmann2002analyzing}. Perhaps its most important use in pattern recognition has been in Random Forest classification and regression \cite{RF}, where it is key to allowing the use of flexible decision trees without over-fitting. While bagging is still most commonly performed by drawing $N$ items with replacement, the way in which the samples are drawn is not necessarily fixed. The number of unique items present in a bootstrap sample has been identified as an important predictor of algorithm performance, and the sampling method can be purposefully modified to control its range and variability \cite{buchlmann2002analyzing,Reduced,Outlier}.  

An added bonus of bagging is the possibility of producing a performance estimate for the ensemble without doing additional cross validation \cite{1996OOB,RF}. Predictions are made for items in the training sample using only those models which did not use them in training. In this way, it is possible to get a performance estimate for an ensemble without the bias of over-fitting. A pessimistic bias has been observed in high dimensional classification problems that was ameliorated using different sampling techniques \cite{mitchell2011bias}, suggesting the importance of unique items in this phenomenon.

\subsection{Bootstrap performance estimators}

There is no one dominant scheme for the statistical comparison of learning algorithms on small to moderately sized datasets, and researchers must choose between different forms of hold out tests and cross validation. One option available to them is the .632+ bootstrap of \cite{632p}. This, and the earlier .632 scheme \cite{632}, extend the bootstrap to the validation of learned prediction rules. Both of these are functions of a quantity called the leave-one-out bootstrap error, which is the average error of the prediction models constructed on many bootstrap samples. To avoid the optimistic bias of testing on the training set, the models are tested only on those points not in their respective bootstrap samples. 

As in bagging, the prediction rules created can have an effectively reduced training sample size in line with the lower number of unique original items in their samples. The expectation of the leave-one-out error then becomes a weighted average of the expected algorithm performance at the different sizes given by the distribution of that quantity. Unlike conventional cross validation schemes where the reduced sample size is obvious, users of this technique may not have the effective reduction in mind. While both the .632 and .632+ methods attempt to correct for the pessimistic bias that may result, the true corrective mapping of this value will be problem specific and cannot be known ahead of time. This is almost certainly the cause of the bias observed by \cite{kim2009}. Knowledge of the distribution of the effective sample sizes in these schemes should be useful to those interpreting the results of these schemes, particularly when there is concern about bias.

\section{The distribution of unique items}
\label{present_dist}

In this section, we first provide a closed form of the distribution of the number of unique items and describe its connection to the family of occupancy distributions. With this information, we are then able to derive the integer moments and pormal limit. Finally, we describe the construction of a heuristic to decide when normal approximation is appropriate. Some illustrations of the distribution are given in Fig. \ref{onedim}.

\begin{defin}
    Where $A$ items are taken with replacement from a sample of size $N$, the probability of obtaining $k$ unique original items is
\begin{equation}
    \label{initialdefine}
    \probP ( k ) = \frac{\falling{N}{k}}{N^A} \stirling{A}{k},
\end{equation}
where $N,k,A \in \mathbb{N}_{0}$, $\falling{N}{k}$ is the falling factorial, and $\stirling{A}{k}$ is a Stirling number of the second kind. Equivalently, this is the probability of obtaining $k$ unique outcomes after $A$ categorical trials in a multinomial problem with $N$ equally likely outcomes.
\end{defin}

\begin{derivation}
Because each of the $N$ items is equally likely to be drawn at each of the $A$ sampling events, there are $N^A$ equally likely bootstrap samples that may be drawn if order is accounted for. The number of ways to make an ordered selection of $k$ unique items from the original $N$ to be present in the bootstrap sample is $\falling{N}{k}$, the falling factorial. This is defined as follows: 
\begin{equation}
    \label{falleq}
    \falling{N}{k} =
    \begin{cases}
        \frac{N!}{(N-k)!}, & \text{if}\,\, k \leq N \\
        0, & \text{otherwise.}
    \end{cases}
\end{equation}
Having already considered the ordering of the $k$ unique items to be included means that we may consider them unlabelled in the final step; here, we consider the number of ways to take $A$ distinguishable draws from $k$ unlabelled unique items such that at least one draw is taken from each. In more generic terms, this is the number of ways to place $A$ labelled items into $k$ subsets such that none of them are empty. This quantity is written as $\stirling{A}{k}$, and is called a Stirling number of the second kind \cite[Chapter~6]{Concrete}. Combining these three results gives us Eq. \eqref{initialdefine}.
\end{derivation}

\subsection{Basic properties and view as a sum of indicators}
\label{indicators}
The number of unique items present may be viewed as the sum of the set of $N$ indicator functions ${d_1,d_2,...,d_N}$. The indicator $d_i$ corresponds to original item $i$, taking the value of one if this is present and zero if not. A simple consideration of the sampling process allows one to find the mean and covariance of the indicator functions. As all points are equally likely to be selected, the chance of an item not being selected at a given sampling trial is $1/N$. In order for $d_i$ to be zero, its corresponding point must not be selected at all $A$ events. This gives us $\probP(d_i = 0) = (1 - 1/N)^A$. As the indicators are binary variables, this is sufficient to determine their mean and variance:

\begin{equation}
    \label{indicator_mean}
    \begin{split}
        \text{E}[d_i] &= \probP(d_i = 1) = 1 - \probP(d_i = 0) \\
               &= 1 - \Big( 1 - \frac{1}{N} \Big)^A, \\
      \text{Var}[d_i] &= \probP(d_i = 0) \cdot \probP(d_i = 1) \\
               &= \Big( 1 - \frac{1}{N} \Big)^A - \Big( 1 - \frac{1}{N} \Big)^{2A}.
    \end{split}
\end{equation}

A similar consideration gives us the probability of two items, $i$ and $j$ (where $i \neq j$) both being excluded from a bootstrap sample. This is given $\probP( d_i = 0, d_j = 0 ) = (1 - 2/N)^A$, and is enough to give us the covariance of two indicator functions,

\begin{equation}
    \label{indicator_cov}
    \begin{split}
        \text{Cov}[d_i,d_j] &= \probP( d_i = 1, d_j = 1 ) - \Big(\probP(d_i = 1)\Big)^2 \\
                     &= \Big(1-\frac{2}{N}\Big)^A - \Big(1-\frac{1}{N}\Big)^{2A}.
    \end{split}
\end{equation}

Now that we know the mean, variance and covariance of the indicators, we have enough information to determine the same quantities for any linear combination of them, including their sum, $k$. Of particular interest to us is the case where the number of samples drawn is proportional to the number of original items. For this reason, along with the general formulae, we also provide limits for the mean and variance as $N$ and $A$ jointly approach infinity:

\begin{equation}
    \begin{split}
        \text{E}[k] &= N(1 - \Big(1 - \frac{1}{N}\Big)^A) = \frac{ N^A - (N-1)^A }{ N^{A-1} } \\
             &\to N(1 - e^{-\alpha}), \\
        \text{Var}[k] &= N(N-1)\Big(1-\frac{2}{N}\Big)^A + N\Big(1 - \frac{1}{N}\Big)^A - N^2\Big(1-\frac{1}{N}\Big)^{2A} \\
               &\to N(e^{-\alpha} - (1+\alpha)e^{-2\alpha}),
    \end{split}
\end{equation}
where the limits refer to the case $A = \alpha N$ and $N \to \infty$.

\begin{figure}[!ht]
\centering
\includegraphics[width=0.55\textwidth]{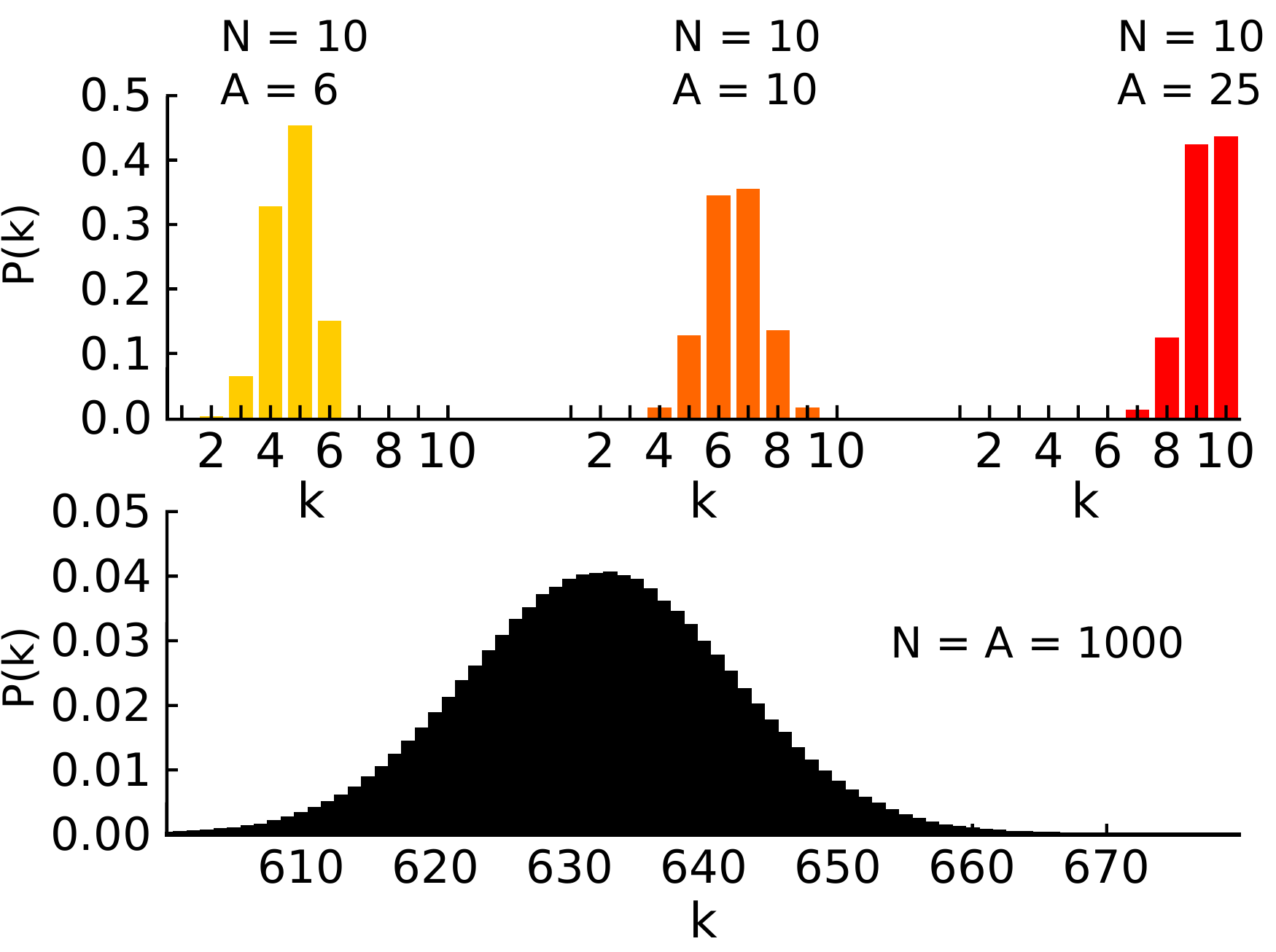}
\caption{Example of the distribution, and its convergence to normal.\label{onedim}}
\end{figure}

\subsection{ Relation to study of urn problems }

While we came to the problem of item inclusion through machine learning, the study of this problem far predates our field of research \cite{Urns,Weiss}. If, instead of the number of items included in the bootstrap sample, we were to consider the number of items \textit{excluded}, we would be studying one of the family of occupancy distributions \cite{Urns}. These arise in the study of urn problems when $A$ balls are independently and randomly placed into $N$ urns in such a way that each ball is equally likely to be placed into any one of the urns. The $i^{th}$ occupancy distribution then details the probabilities of obtaining different values of $m_i$, the number of urns containing exactly $i$ balls. In our problem, the $A$ drawing events are the balls of the urn model, and the $N$ original items that may be selected are the urns with which the balls/draws can be associated. The number of empty urns is $m_0$, and it has the same distribution as the number of excluded items in our sampling problem. As $k$ is just $N-m_0$, it is easy to drawn upon useful results.

\subsection{Asymptotic normality}

While there exist a great many limit theorems for occupancy distributions \cite[Chapter~6]{Urns}, in our case we need only the earliest. \cite{Weiss} proved that the limit distribution of $m_0$ was normal in the case that $A = \alpha N$, and $N \to \infty$. As $k$ is distributed as $N-m_0$, it too must be normally distributed. 

\subsection{Integer moments} 
The $t^{th}$ raw moment of the distribution of $k$ is
\begin{equation}
\label{moments}
\begin{split}
    \mu'_t &= \sum_{k = 0}^{N}  k^t \frac{\falling{N}{k}}{N^A} \stirling{A}{k} \\
          &= \sum_{\substack{u,v,w \in \mathbb{N}_{0}\\ u + v + w = t}} \nchoosek{ t }{ u } (-1)^v \stirling{v+w}{v} \falling{N}{v} (N-v)^{u} \Big( 1-\frac{v}{N} \Big)^{A}.
\end{split}
\end{equation}
A proof for this is provided in appendix \ref{rawproof}. \cite{Weiss} provides a formula for the central moments of the $0^{th}$ occupancy distribution, which for us corresponds to the number of excluded items. By simplifying this and adapting the sign to the distribution of the number of items \textit{included}, we can give the $t^{th}$ central moment as
\begin{equation}
    \begin{split}
          \mu_t &= \sum_{k = 0}^{N}  (k- \mu'_1) ^t \frac{\falling{N}{k}}{N^A} \stirling{A}{k} \\
                &= \sum_{\substack{u,v,w \in \mathbb{N}_{0}\\ u + v + w = t}} \nchoosek{ t }{ u } (-1)^{v+w} \stirling{v+w}{v} N^{u} \falling{N}{v} \Big( 1-\frac{v}{N} \Big)^{A} \Big( 1-\frac{1}{N} \Big)^{uA}.
    \end{split}
\end{equation}

In both of these formulae the sum over $u,v$ and $w$ comprises $(t+1)(t+2)/2$ unique locations. Reassuringly, they provide the identity $\mu_0 = \mu'_0 = 1$, as well as results for the mean and variance that match up to those derived by the considerations of \ref{indicators}. 

\subsection{Building a heuristic for normal approximation}

\label{heuristic}

The Stirling numbers quickly overflow conventional precision, making the form of the distribution in  Eq. \eqref{initialdefine} hard to work with at reasonable values of $N$ and $A$. While it is known that the distribution does converge to normal, this is not practically useful unless one knows when a normal approximation is appropriate. To allow others to use a normal approximation with confidence, we decided to build a heuristic based on existing rules of thumb for normal approximation of the binomial distribution. 

Where the number of Bernoulli trials is $n_b$ and the probability of success is $p$ at any one, a normal approximation works best when the $n_b$ is high and $p$ is not too close to zero or one. While there is a degree of arbitrariness in deciding when two distributions are ``close enough'', there have long existed conventions for doing so. Two of the most common such rules from applied statistics \cite{box1978statistics,Thumb} are 
 \begin{equation}
     \label{binrule}
     \begin{split}
         &n_b p > 5 \text{ and } n_b (1-p) > 5,  \text{ and}\\
         &  \frac{| 1 - 2p | }{p(1-p)} < 0.3 \sqrt{n_b} \,\, \text{ and } \,\, n_b > 5.
     \end{split}
  \end{equation}
In order to build our own rule for approximation of $\probP(k)$, we decided first to measure the convergence of the true distribution and its normal approximation to measure the quality of the approximation of the binomial case under the rules of \eqref{binrule}. We could then experiment with building rules to permit approximation of the distribution of $k$ such that convergence was always better than the worst case seen for the binomial approximation. To measure the quality of approximation, we chose a metric closely linked to the concept of convergence in distribution: the maximum absolute difference in cumulative distribution (MADCD) between the true distribution and its continuity-corrected normal approximation on the set of all possible outcome values. The continuity correction was performed using a shift of $0.5$. Computation of the exact value of the true distribution was performed using the symbolic mathematics toolkit of the MATLAB software package\footnote{http://uk.mathworks.com/products/matlab/}.

To find the worst permissible binomial approximation, we measured the MADCD between the binomial distribution and its continuity-corrected normal approximation across the grid of parameters specified by $n_b \in \{1,2,...,400\}$ and $p \in \{1E-5,2E-5,...,0.5\}$. We combined the two rules of Eq. \eqref{binrule} to produce a third, stricter rule, and found a maximum MADCD of 0.0205 within the resulting acceptance region. As this value occurred far from the boundary of the allowed set, we think it unlikely that expanding the grid would produce higher values. 

We then mapped the MADCD for $\probP(k)$ and its continuity-corrected normal approximation across the grid of parameters $A = {1,2,...,400}$ and $N = {1,2,...,400}$, so that we could begin constructing a rule. The results can be seen in Fig. \ref{MADCD_GRID}. After inspecting this map, we decided to construct rules for the minimum permissible $A$ at a given $N$ (and vice versa), of the form  $\ln(Y_{min}) \geq a \ln(X) + b$. While there are areas near $A << N$ and $N << A$ where the MADCD value is low, this did not represent convergence to normal, rather, most of the probability mass has collapsed to a single value of $k$. For this reason, we chose to focus on the central valley of good approximation near the axis $A=N$.

\begin{figure}[!ht]
\centering
\includegraphics[width=0.55\textwidth]{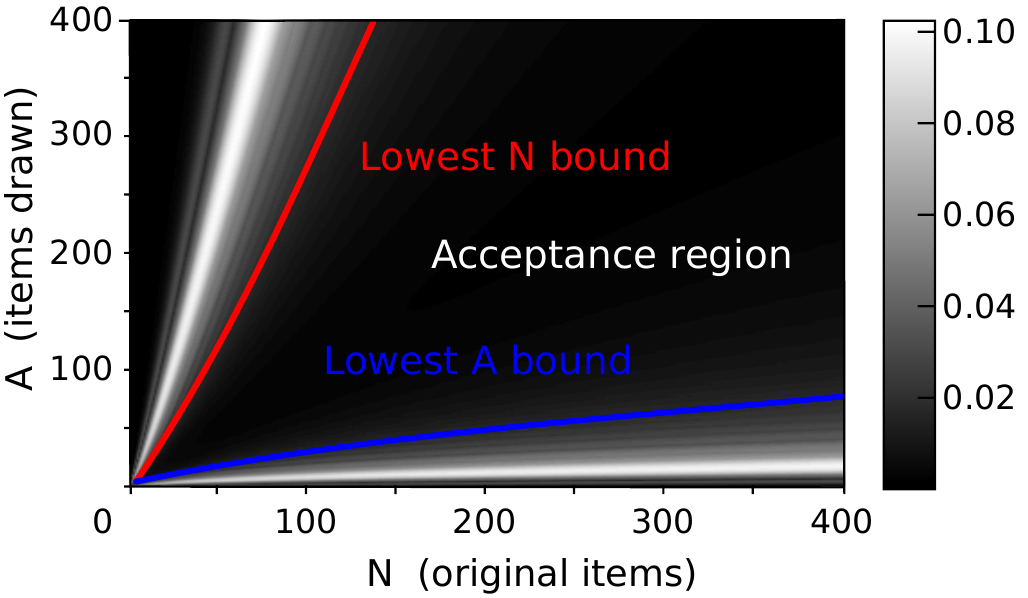}
\caption{MADCD measured across parameter space, with boundaries and acceptance region for normal approximation illustrated. \label{MADCD_GRID}}
\end{figure}

We were mainly concerned with producing a reliable rule that would not fail at parameter setting outside the mapped grid; while we experimented with different fitting methods, we chose to select the rule parameters by hand so as to have greater control over features likely to make it more conservative. This also made it easier to restrict the rule parameters to those fully specified by two digits of decimal precision. We produced a rule such that 
\begin{itemize}
    \item all MADCD values in the acceptance region were below the specified level of 0.0205,
    \item the boundaries of the acceptance region did not cross the ridges of the distribution as $N$ and $A$ increase, unless it was to move towards the central valley, and
    \item the values on the boundary had to appear to be decreasing with increasing $N$ and $A$.
\end{itemize}

We excluded the smallest values of $A$ and $N$ in order to better fit of the general pattern. The resulting rule is that a normal approximation is permissible when
\begin{equation}
    1.4 N^{0.67} \leq A \leq 1.13 N^{1.19}, \text{ } N > 5, \text{ and } \, A > 5.
\end{equation}
The acceptance region specified and the values of MADCD at its boundaries can be seen in Fig. \ref{MADCD_GRID} and \ref{MADCD_GRID_BOUND} respectively. 

In addition to the MADCD, we also computed the Jensen-Shannon divergence between the true probability distribution and the normal approximation. As this is function of a probability distribution itself, rather than a cumulative distribution, it was necessary to discretise the normal approximation. This was done by defining the probability mass at a location as the difference in the continuity-corrected cumulative distribution between that location and the next. We found a maximum value of 0.0444 within the allowed region for the binomial distribution, and 0.0631 for the allowed region for the distribution of $k$. All the highest divergence values for $\probP(k)$ and its approximation occurred on the boundaries near the very lowest allowed values of $A$ and $N$.

\begin{figure}[!ht]
\centering
\includegraphics[width=0.55\textwidth]{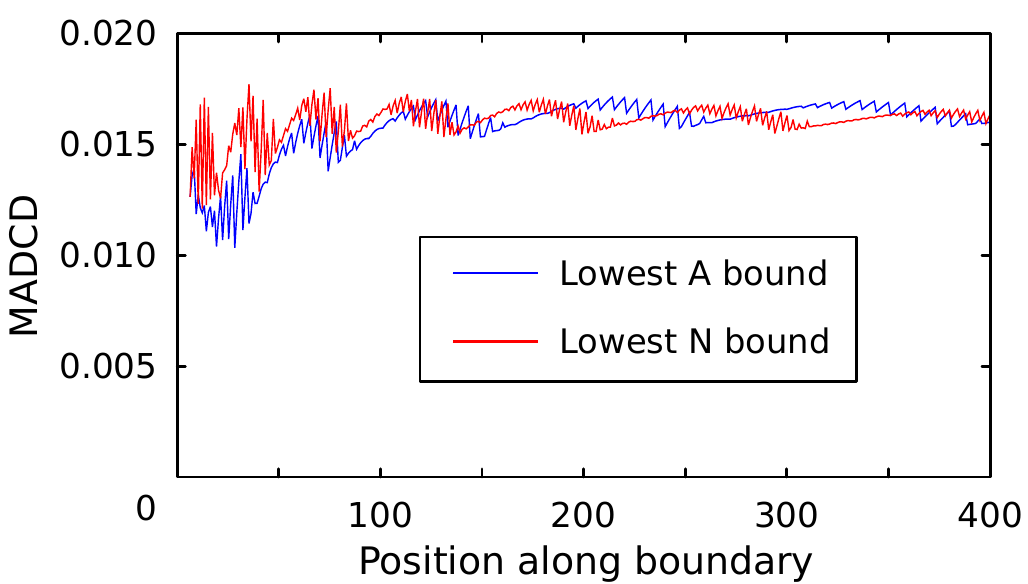}
\caption{MADCD value along boundaries of acceptance region. \label{MADCD_GRID_BOUND}}
\end{figure}

\section{Unique items from multiple categories}
\label{multi}

In some machine learning tasks, notably in supervised classification, items in the sample belong to one of several categories. It is then of interest to those studying bootstrap techniques to know the distribution of the balance of the unique items drawn from each category. We consider the case where there are $C$ categories present, the $s^{th}$ of which contains $N_s$ items in the original sample of $N$. When $A$ samples are drawn without replacement, the distribution can be found by marginalising over the number of draws taken from each category, the various $a_s$. The probability of obtaining the vector $\mathbf{k} = \{k_1,k_2,...,k_C\}$ detailing the number of items $k_s$ from each of category is

\begin{equation}
    \probP ( \mathbf{k} ) = A! \sum_{ {\substack{ a_1,...,a_c \in \mathbb{N}_0 \\ a_1 + ... + a_C = A }}} \prod_{s=1}^{C} \frac{\falling{N_s}{k_s} }{ N_{s}^{a_s} a_{s}!} \stirling{a_s}{k_s}.
\end{equation}

An example of this distribution in the case of two categories is illustrated in Fig. \ref{twodim}.

\begin{figure}[!ht]
\centering
    \includegraphics[width=0.55\textwidth]{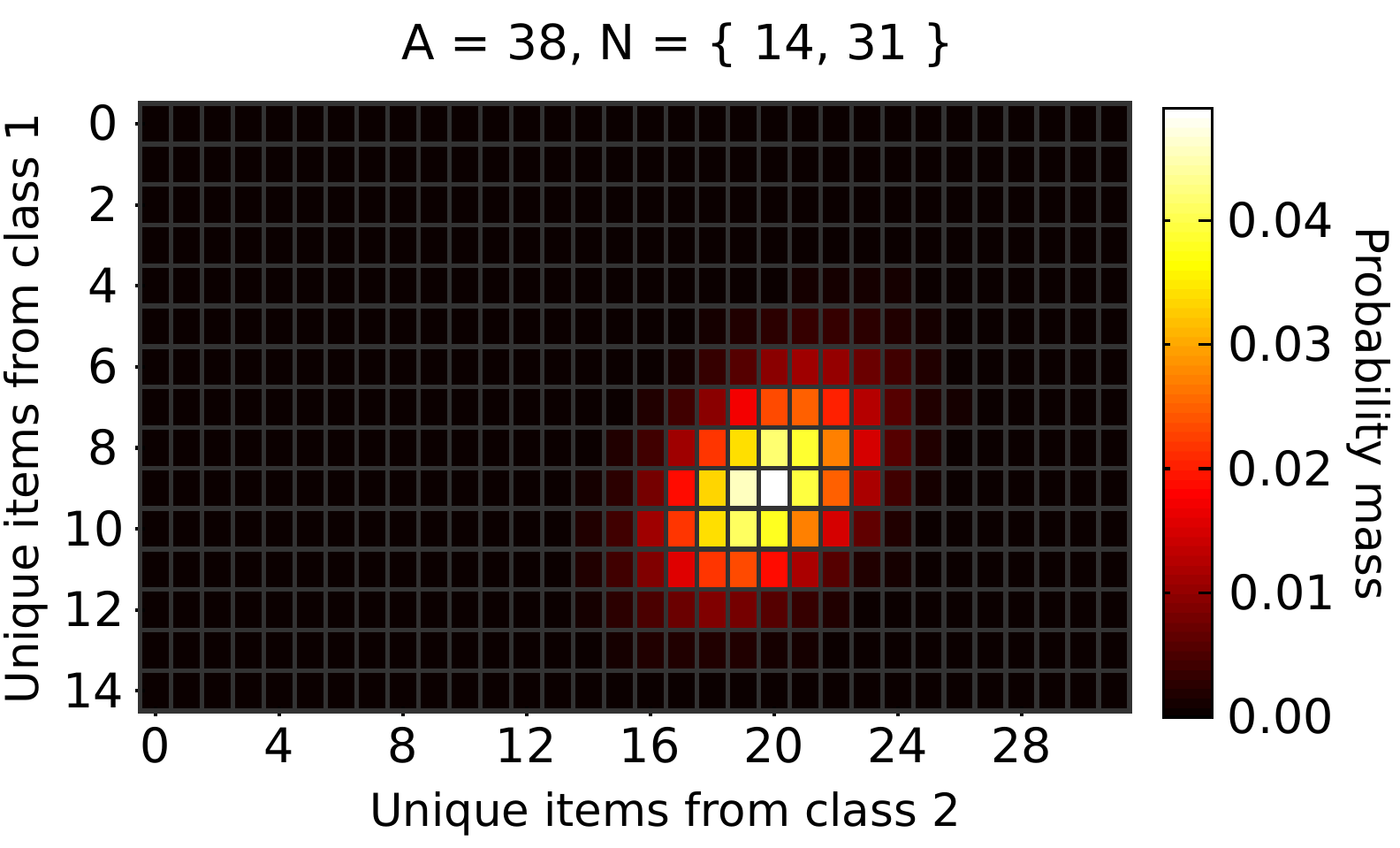}
\caption{Example of the joint distribution of the number of unique items from two categories.\label{twodim}}
\end{figure}

\subsection{ Mean and variance }
From consideration of the mean and covariance of the individual items' participations as in section \ref{indicators}, it is straightforward to derive the mean, variance and covariance; these are the following:
\begin{equation}
    \begin{split}
        \text{E}[k_i] &= N_i (1 - \Big(1 - \frac{1}{N}\Big)^A), \\
        \text{Var}[k_i] &= N_i(N_i-1)\Big(1-\frac{2}{N}\Big)^A + N_i\Big(1 - \frac{1}{N}\Big)^A - N_i^2\Big(1-\frac{1}{N}\Big)^{2A}, \\
        \text{Cov}[k_i,k_j] &= N_iN_j\Big[\Big(1-\frac{2}{N}\Big)^A - \Big(1-\frac{1}{N}\Big)^{2A} \Big], \text{ for $i \neq j$.}
    \end{split}
\end{equation}

\subsection{Asymptotic normality and approximation}

Not only is the total number of unique items from the original sample asymptotically normally distributed under appropriate conditions, but so is the number from the subset belonging to each category. To show this, we must turn to limit theorems for dependent variables rather than the study of urn problems. The covariance between the presence indicators of section \ref{indicators} is always negative (see Eq. \eqref{indicator_cov}). As these are binary variables, this means that the joint distribution of any two indicators $d_i$ and $d_j$ ($i \neq j$) must meet the condition
\begin{equation} 
\probP( d_i < D_1, d_j < D_2 ) \, \leq \, \probP( d_i < D_1 ) \cdot \probP( d_j < D_2) ,
\end{equation}
for any values $D1$ and $D2$. Any collection of the indicators can then be termed a pairwise negative quadrant dependent (PNQD) sequence \cite{LiPNQD}. Weighted sums of PNQD sequences are asymptotically normally distributed as the number of items in the sequence goes to infinity \cite{LiPNQD}. The number of unique items present from the subset of a category is a weighted sum of the indicators, and so its asymptotic distribution is normal as the number of items present and items drawn jointly approach infinity. The number of items in a category must approach infinity too, as the number of possible values that the sum can take must also go to infinity for its discrete distribution to converge to a continuous one. This is guaranteed in the case where each category contains a fixed fraction of the total number of items.

To show the limit of the joint distribution of the numbers of unique items from the categories is multivariate normal, we can consider the set of weighted sums that give equal weight to all items within a category. If the weight given to items in category $s$ is $\gamma_s$ and we add the constraint $\sum_s \gamma_s = 1$, then the distributions of this set still include all possible projections of the multivariate distribution of $\mathbf{k}$. As all projections of this distribution are normal, then it itself must be multivariate normal \cite{intermediate}.

\subsubsection{ Approximation heuristic for multivariate case }

We now consider when it is appropriate to approximate the distribution of the number of unique items from a subset of the data with a normal distribution. We consider the case where the original items has $N$ items of which $N_s$ belong to a particular subset of fixed size, and where the number of items drawn is $A = \alpha N$. In the case where $N_s = N$, we have the heuristic developed in \ref{heuristic}. As $N \to \infty$ while $N_s$ remains fixed, the correlation between the indicator functions of the items of category $s$ goes to zero and the distribution of $k_s$ approaches a binomial distribution with $N_s$ trials and a probability of success $1 - (1 - 1/N_s)^A$. We argue that if both the distribution with $N = N_s$ and that with $N \to \infty$ are well approximated by the binomial, it is highly likely that the intermediate stages are too.

This suggests the following rule for normal approximation of $k_s$: if the number of samples drawn is proportional to the number of items ($A = \alpha N$) and the expected number drawn from the category $s$ is $\bar{a_s} = \alpha N_s$, then if both the rules for approximation of the binomial limit (with $p = 1 - e^{-\alpha}$ and $n_b = N_s$) are met, and the rule for the single category case with $N$ and $A$ substituted for $N_s$ and $\bar{a_s}$ respectively, then a normal approximation should be appropriate for $k_s$ for any value of $N$. 

The resulting rule requires four inequalities to be met, which makes it a little unintuitive and tedious to apply. As the inequalities due to the consideration of the binomial limit only have a small effect on the acceptance region near the limit of lowest $N_s$ / highest $\bar{a}_s$, and eventually become redundant as $N_s$ and $\bar{a}_s$ increase, it is possible to account for them with a simple offset of the rule for the single category case. A small strip of $\bar{a}_s$--$N_s$ parameter space is then sacrificed for the sake of simplicity. We find a normal approximation to be permissible when
\begin{equation}
1.4 N_s^{0.67} \leq \bar{a}_s \leq 1.13 (N_s-8)^{1.19}. 
\end{equation}
This has no solutions with $\bar{a}_s < 9$ or $N_s < 14$.

\section{Conclusions}

We have aimed to make a clear and concise answer to the titular question readily available to machine learning researchers. We have summarised the key properties of this distribution, and provided practical information about when a normal approximation is appropriate in the form of a heuristic, allowing others to justify its use. With particular consideration to classification problems, we have considered the generalisation of this distribution to the scenario where items come from multiple categories. In this case, we have provided a theoretical guarantee of asymptotic normality and a considered heuristic rule for approximation based its limit distributions. We hope our results rovide a useful resource to researchers interested in understanding or modifying the number of unique items to appear under random sampling with that replacement.

\section*{Acknowledgements} \begin{small} This work is funded by UCL (code ELCX), a CASE studentship with the EPSRC and GE healthcare, EPSRC grants (EP/H046410/1, EP/\- H046410/1, EP/J020990/1, EP/K005278), the MRC (MR/J01107X/1), the EU-FP7 project VPH-DARE@IT (FP7-ICT-2011-9-601055), the NIHR Biomedical Research Unit (Dementia) at UCL and the National Institute for Health Research University College London Hospitals Biomedical Research Centre (NIHR BRC UCLH/UCL High Impact Initiative). \end{small}

\appendix
\section{Derivation of raw integer moments\label{rawproof}}
Before proceeding, we shall need two intermediate results. 
\begin{lemma} The multiplication of a Stirling number $\stirling{A}{k}$ by its second argument raised to an integer power $t$ may be expanded as
\begin{equation}
    \label{triangle_lemma}
k^t \stirling{A}{k} =  \sum_{\substack{ u,v,w \in \mathbb{N}_{0} \\  u + v + w = t}} \nchoosek{t}{u} (-1)^v \stirling{v+w}{v} \stirling{A+u}{k-v}.
\end{equation}
\end{lemma}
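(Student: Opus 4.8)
The plan is to prove the identity by a counting argument, showing that both sides enumerate the same collection of set partitions. First I would reinterpret the left-hand side combinatorially. The Stirling number $\stirling{A}{k}$ counts set partitions of $\{1,\dots,A\}$ into $k$ nonempty (unlabelled) blocks, and the factor $k^t$ counts the ways to assign each of $t$ extra elements, say $\{A+1,\dots,A+t\}$, to one of those $k$ blocks. Absorbing each extra element into the block it is assigned to produces a set partition of $\{1,\dots,A+t\}$ into exactly $k$ blocks in which every block still contains at least one element of $\{1,\dots,A\}$; this map is a bijection onto the set of all such partitions, inverted by restricting to $\{1,\dots,A\}$. Hence $k^t \stirling{A}{k}$ equals the number $P$ of set partitions of $\{1,\dots,A+t\}$ into $k$ blocks, none of which is contained in the ``new'' set $\{A+1,\dots,A+t\}$.

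Next I would compute $P$ by inclusion--exclusion on the ``bad'' blocks, i.e.\ those contained in the new set. For $v\ge 0$ let $f(v)$ be the number of pairs consisting of a set partition of $\{1,\dots,A+t\}$ into $k$ blocks together with a distinguished choice of $v$ of its blocks that are contained in the new set. Since $\sum_v (-1)^v \binom{m}{v} = [\,m=0\,]$, the sieve gives $P = \sum_{v\ge 0}(-1)^v f(v)$. To evaluate $f(v)$ directly: choose the union $S$ of the $v$ distinguished bad blocks among the new elements, with $|S|=s$ chosen in $\binom{t}{s}$ ways; partition $S$ into exactly $v$ nonempty blocks in $\stirling{s}{v}$ ways; and partition the remaining $A+t-s$ elements arbitrarily into $k-v$ blocks in $\stirling{A+t-s}{k-v}$ ways. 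Thus $f(v)=\sum_s \binom{t}{s}\stirling{s}{v}\stirling{A+t-s}{k-v}$.

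Finally I would reindex. Writing $u=t-s$ turns $\binom{t}{s}$ into $\binom{t}{u}$, $\stirling{s}{v}$ into $\stirling{t-u}{v}$, and $\stirling{A+t-s}{k-v}$ into $\stirling{A+u}{k-v}$, so $P = \sum_{u,v}\binom{t}{u}(-1)^v \stirling{t-u}{v}\stirling{A+u}{k-v}$; introducing $w=t-u-v$ (nonnegative exactly when $\stirling{t-u}{v}$ is nonzero) converts the double sum into the sum over $u+v+w=t$ and $\stirling{t-u}{v}$ into $\stirling{v+w}{v}$, which is exactly the right-hand side of \eqref{triangle_lemma}.

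The step most in need of care is the inclusion--exclusion bookkeeping with unlabelled blocks: one must check that the directly constructed quantity $f(v)$ really equals $\sum_\pi \binom{b(\pi)}{v}$, where $b(\pi)$ is the number of bad blocks of $\pi$ (a block being determined as a subset, so ``choosing $v$ of them'' is unambiguous), and that the usual boundary conventions --- $\stirling{0}{0}=1$, and $\stirling{n}{m}=0$ for $m<0$ or $m>n$ --- make the counts agree with the formula in the degenerate ranges of $k$, $A$ and $t$. A purely algebraic alternative is induction on $t$ via $\stirling{A+1}{k}=k\stirling{A}{k}+\stirling{A}{k-1}$, with $t=0$ trivial and $t=1$ reproducing the recurrence; but the induction step amounts to the same reindexing of a Stirling--binomial double sum carried out less transparently, so I would prefer the combinatorial argument.
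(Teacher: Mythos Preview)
Your argument is correct. The bijection between $k^t\stirling{A}{k}$ and partitions of $\{1,\dots,A+t\}$ into $k$ blocks, none contained in the new set, is standard and clean; the inclusion--exclusion count of $f(v)$ by first fixing the union $S$ of the distinguished bad blocks is the right move, and the reindexing $u=t-s$, $w=t-u-v$ lands exactly on the stated sum. The caveats you flag (the sieve identity $\sum_\pi\binom{b(\pi)}{v}=f(v)$ with unlabelled blocks, and the boundary conventions for Stirling numbers) are genuine but routine.

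The paper takes a quite different route: it works purely from the shifted recurrence $k\stirling{A}{k-m}=\stirling{A+1}{k-m}-\stirling{A}{k-m-1}+m\stirling{A}{k-m}$, interprets multiplication by $k$ as applying three operators $\hat u,\hat v,\hat w$, and tracks the resulting ``charge'' through a Pascal-like pyramid indexed by $(u,v,w)$. Commutativity of $\hat u$ with the others produces the $\binom{t}{u}$ factor, and the residual $\hat v$--$\hat w$ triangle is identified as $(-1)^v\stirling{v+w}{v}$ by matching initial conditions and recurrence. This is essentially the algebraic induction you mention as an alternative, organised diagrammatically rather than as a formal induction on $t$. Your combinatorial proof is more self-contained and explains \emph{why} the inner Stirling number $\stirling{v+w}{v}$ appears (it literally counts the bad-block partitions of $S$); the paper's operator picture, on the other hand, makes the three-way split $(u,v,w)$ visible from the outset and would adapt more mechanically to variants of the recurrence.
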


\begin{proof} We begin by modifying the recurrence relation of the Stirling numbers \cite{Concrete} by an offset in $k$ to provide the following identity:

\begin{equation}
    \label{Kstirling}
    k \stirling{A}{k - m} = \stirling{A+1}{k-m} - \stirling{A}{k-m-1} + m \stirling{A}{k-m}
\end{equation}
Looking at Eq. \eqref{Kstirling}, we can see multiplication by $k$ as triplicating a Stirling number and coefficient entity by applying three linear operators to it:
\begin{itemize}
    \item[$\hat{u}$] Add one to the upper argument.
    \item[$\hat{v}$] Subtract one from the lower argument and multiply by minus one.
    \item[$\hat{w}$] Multiplication by offset between $k$ and the second argument of the Stirling number (i.e., the initial $m$ plus the number of $\hat{v}$ operations that have occurred to the number thus far).
\end{itemize}
By multiplying by $k^t$, we sequentially apply all these operations $t$ times. In a way similar to Pascal's triangle, we can see this triplication as carrying a ``charge'' of Stirling numbers down to the next level of a pyramid (see \textit{a)} in Fig. \ref{pyramid}. Locations in the pyramid can be described by the number of times each operator has been applied to all charge that reached them. We shall call these coordinates $u$, $v$ and $w$. Any given selection of $t$ populates the pyramid down to the level described by $u+v+w=t$. At any point in the pyramid, the arguments of the Stirling numbers are determined only by the coordinates.

If these operators commuted, we could consider a flow of charge exactly the same as in Pascal's triangle, and apply the effects of the operators on the coefficients of the Stirling numbers separately at the end. As $\hat{v}$ and $\hat{w}$ do not commute with one another, we must consider their interaction and how they change the charge passing through them. $\hat{u}$ commutes with the other operators and does not change the coefficient of the charge, so we can ``factorise'' the pyramid. That is, we need only consider the number of ways to select $t - u$ non-$\hat{u}$ operations, and then to sum over all possible orders of choosing $v$ and $u$ operations of $\hat{v}$ and $\hat{w}$ respectively. 

The amount of charge at a location in the pyramid will be $\nchoosek{t}{u} f(v,w)$, where $f(v,w)$ is the coefficient from the charge triangle of the operators $\hat{v}$ and $\hat{w}$ alone. To determine what this is, we use the fact that the $m$ in Eq. \eqref{Kstirling} is initially zero. See \textit{b} in Fig. \ref{pyramid} for illustration. This gives us the initial conditions $f(0,0) = 1$ and $f(v,0) = 0$ for $v > 0$. Together with the recurrence $f(v,w) = kf(v,w-1) - f(v-1,w)$, these are enough to uniquely specify $f(v,w) = (-1)^v \stirling{v+w}{v}$. This then gives us Eq. \eqref{triangle_lemma}.

\begin{figure}[!t]
\centering
    \includegraphics[width=0.55\textwidth]{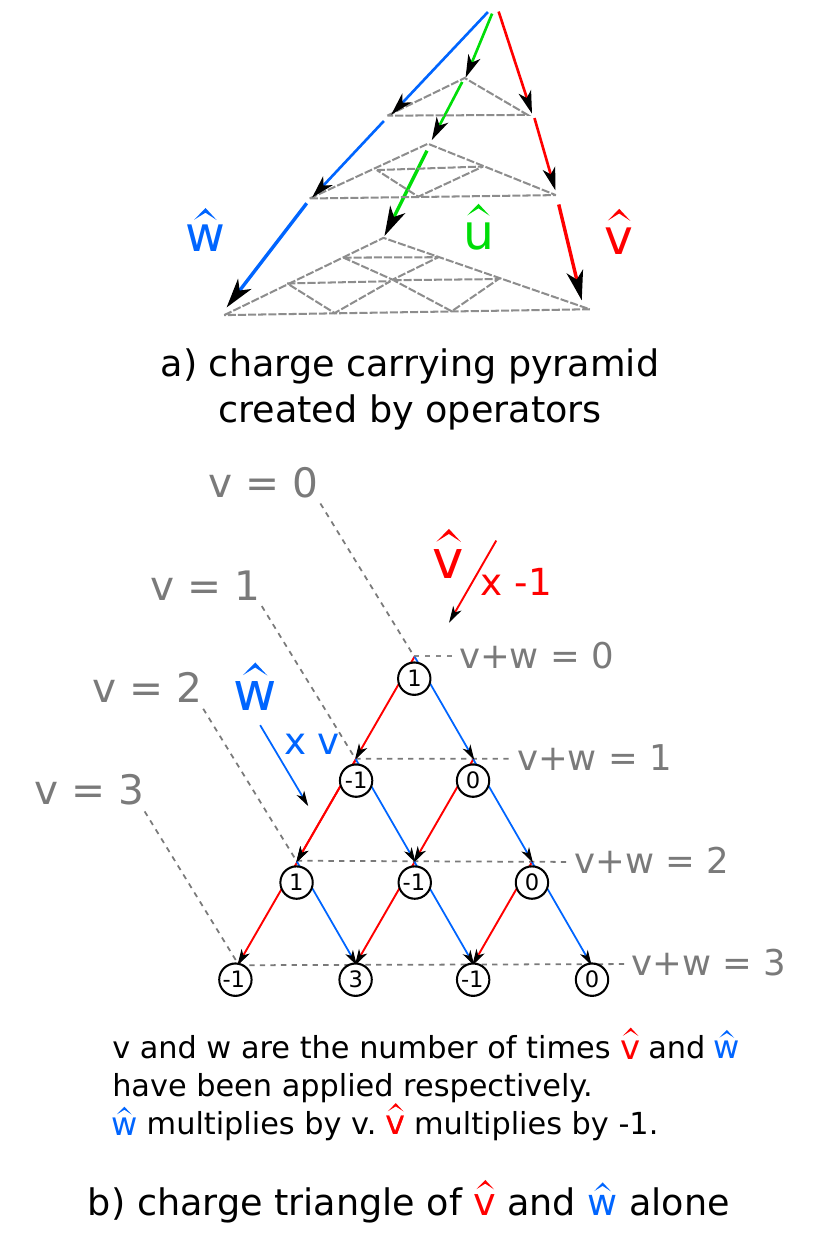}
\caption{Illustration of the flow of Stirling numbers via the operators. \label{pyramid} }
\end{figure}

\end{proof}

\begin{lemma} The standard relation between summations of the Stirling numbers and falling factorials powers may be modified by an integer offset $w$ in the second argument to produce the following:
\begin{equation}
    \label{falling_lemma}
    \sum_{k = 0}^{K} \falling{N}{k} \stirling{A}{k-w} = \falling{N}{w} (N-w)^A,  \text{ providing } K \geq \text{min}(N,A).
\end{equation}
\end{lemma}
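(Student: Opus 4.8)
The plan is to reduce the identity to the $w=0$ case, which is the standard expansion of an ordinary power in terms of falling factorials and Stirling numbers of the second kind, $x^A = \sum_{j \ge 0} \stirling{A}{j}\falling{x}{j}$ (see \cite[Chapter~6]{Concrete}). The reduction is effected by a shift of the summation variable together with a factorisation of the falling factorial.

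First I would substitute $j = k - w$. Since $\stirling{A}{k-w}$ vanishes whenever its lower argument is negative, the terms with $k < w$ contribute nothing, and after reindexing the left-hand side reads $\sum_{j=0}^{K-w}\falling{N}{j+w}\stirling{A}{j}$. Next I would apply the factorisation $\falling{N}{j+w} = \falling{N}{w}\,\falling{N-w}{j}$, which is immediate from the definition \eqref{falleq}: the first $w$ descending factors of $\falling{N}{j+w}$ give $\falling{N}{w}$, and the remaining $j$ descending factors, which start at $N-w$, give $\falling{N-w}{j}$ (both sides are zero, so the identity still holds, in the degenerate range $j+w > N$). Pulling the constant $\falling{N}{w}$ outside the sum leaves $\falling{N}{w}\sum_{j=0}^{K-w}\falling{N-w}{j}\stirling{A}{j}$, and the $w=0$ identity with $x = N-w$ evaluates the remaining sum as $(N-w)^A$, which is exactly the claimed right-hand side.

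The only real subtlety, and the step I expect to need the most care, is checking that the finite range of summation already contains every nonzero term, so that the truncated sum genuinely equals the full series $\sum_{j\ge 0}\falling{N-w}{j}\stirling{A}{j} = (N-w)^A$. The summand $\falling{N-w}{j}\stirling{A}{j}$ is zero as soon as $j > \min(N-w, A)$ --- the falling factorial killing $j > N-w$ and the Stirling number killing $j > A$ --- so it is enough to have $K - w \ge \min(N-w, A)$; the hypothesis $K \ge \min(N,A)$ secures this in the situations we actually use, and in particular it is automatic when $K = N$, which is the case arising in the derivation of the raw moments \eqref{moments}. Finally I would dispose in a line each of the trivial edge cases: $w > N$, where both sides vanish because $\falling{N}{w} = 0$ and no admissible index $k$ survives, and $A = 0$, where the left-hand side collapses to the single term $\falling{N}{w}$.
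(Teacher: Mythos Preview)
Your argument is essentially the paper's own: start from the standard identity $\sum_k \falling{N}{k}\stirling{A}{k}=N^A$, shift the index by $w$, and use the factorisation $\falling{N}{j+w}=\falling{N}{w}\falling{N-w}{j}$ (the paper's Eq.~\eqref{fallsplit}); the paper just states these two ingredients and leaves the splice to the reader, whereas you spell it out.

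Your extra care about the summation range is in fact warranted. The nonzero terms of $\falling{N}{k}\stirling{A}{k-w}$ occur for $w\le k\le \min(N,A+w)$, so the condition actually needed is $K\ge \min(N,A+w)$, not merely $K\ge\min(N,A)$ as the lemma is stated; for instance with $N=5$, $A=3$, $w=2$, $K=3$ the left side is $60$ while the right side is $540$. The paper's brief proof does not address this, and your hedge --- that the hypothesis suffices ``in the situations we actually use'', in particular $K=N$ --- is exactly right, since that is the only case invoked in the derivation of Eq.~\eqref{moments}.
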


\begin{proof}
We begin with the standard relation \cite[p.~264]{Concrete}
\begin{equation}
\label{standardsum}
\sum_{k = 0}^{A} \falling{N}{k} \stirling{A}{k} = N^A.
\end{equation}
From Eq. \eqref{falleq}, it is clear that $\falling{N}{k} = 0$ for values of $k$ higher than $N$. Consistent with their recurrence relation and combinatorial meaning, the Stirling numbers $\stirling{A}{k}$ can be defined as zero for positive values of $k$ outside $\{0,1,2,...,A\}$ \cite{Concrete}. As one of these two factors is zero whenever $k$ exceeds either $A$ or $N$, we can truncate the sum in \ref{standardsum} to the smaller of those without effect. We can therefore write
\begin{equation}
    \label{newsum}
    \sum_{k = 0}^{K} \falling{N}{k} \stirling{A}{k} = N^A, \text{ providing } K \geq \text{min}(N,A).
\end{equation}
If we combine this with the knowledge that
\begin{equation}
    \label{fallsplit}
    \falling{N}{k + m} = \falling{N}{m} \falling{N-m}{k},
\end{equation}
we can then produce Eq. \eqref{falling_lemma}.

\end{proof}

\begin{fproof}

If we take the definition of the moment as in the first line of Eq. \eqref{moments}, apply Eq. \eqref{triangle_lemma}, reverse the order of summation, and then apply Eq. \eqref{falling_lemma}, we are then able to derive the final form for the raw integer moments seen in the second line (of Eq. \ref{moments}).

\end{fproof}

\end{document}